\documentclass{article} 
\usepackage{arxiv}
\usepackage{natbib}
\usepackage{hyperref}
\usepackage{url}
\usepackage[table]{xcolor}
\usepackage{xspace}
\usepackage{multirow}
\usepackage{tcolorbox}

\usepackage[utf8]{inputenc} %
\usepackage[T1]{fontenc}    %
\usepackage{url}            %
\usepackage{booktabs}       %
\usepackage{amsfonts}       %
\usepackage{nicefrac}       %
\usepackage{microtype}      %
\usepackage{amssymb} 
\usepackage{amsmath,csquotes, array,amsthm,thmtools,amsbsy}
\usepackage{xparse}
\usepackage{footmisc}
\usepackage{thm-restate}
\usepackage{mathtools}
\usepackage{tabularx}
\usepackage[shortlabels]{enumitem}

\theoremstyle{plain}

\newtheorem{proposition}{Proposition}

\theoremstyle{definition}

\theoremstyle{remark}

\usepackage{graphicx}
\usepackage{float}
\usepackage{subcaption}
\graphicspath{{figures/}}
\newlength{\smallfigwidth}
\newlength{\smallfigheight}
\newlength{\smallfigsep}
\newlength{\legendheight}
\def\v+#1{\ensuremath{\mathbf{#1}}\xspace}
\def\m+#1{\ensuremath{\mathbf{#1}}\xspace}

\title{Integrating Local and Global Entropy for Uncertainty Quantification in LLMs}

\author{ 
    {Johanne Medina} \\
	Qatar Computing Research Institute, HBKU\\
	Doha, Qatar\\
	\texttt{jomedina@hbku.edu.qa} \\
    \And
    {Tianyi Zhou}\\
	KTH Royal Institute of Technology\\
	Stockholm, Sweden \\
	\texttt{tzho@kth.se} \\
	\And
    {Keivin Isufaj} \\
	Qatar Computing Research Institute, HBKU\\
	Doha, Qatar\\
	\texttt{keisufaj@hbku.edu.qa} \\
    \And
    {Aristides Gionis} \\
	KTH Royal Institute of Technology\\
	Stockholm, Sweden \\
	\texttt{argioni@kth.se} \\
    \And
	{Sanjay Chawla} \\
	Qatar Computing Research Institute, HBKU\\
	Doha, Qatar\\
	\texttt{schawla@hbku.edu.qa} \\
}

\begin{document}

\maketitle

\begin{abstract}
Large language models hallucinate confidently, making uncertainty
quantification (UQ) essential for reliable deployment. Existing methods rely
predominantly on token-level signals, leaving the geometric structure of intermediate hidden states underused. In this paper, we take the geometric complexity of hidden-state matrices as a measure of the \emph{global} uncertainty of LLMs, while treating token-level uncertainty estimation as a \emph{local} metric. We show that hidden-state geometric entropy (\emph{global} uncertainty) and token-level entropy (\emph{local} uncertainty) are statistically near-orthogonal, capturing distinct failure regimes for reliability prediction. In particular, global geometry recovers the confident-but-wrong failure mode that local signals systematically miss.
Building on this, we propose \textbf{Global-Local Uncertainty (GLU)}, an
unsupervised, single-pass score that fuses the two signals via a multiplicative gate. Across three model families and six benchmarks, GLU matches or outperforms all unsupervised baselines while requiring only a single forward pass and remaining length-normalized and architecture-agnostic. Code is available on \url{https://github.com/qcri/GLU.git}.

\end{abstract}

\section{Introduction}
Hallucination remains one of the most persistent failure modes of large language 
models (LLMs). Despite rapid advances in capability, frontier systems continue to 
produce fluent, specific, and incorrect answers. A recent cross-domain benchmark 
spanning 6{,}000 questions across 42 topics found that fewer than 1\% of evaluated 
models score above zero on a $[-100, 100]$ reliability index, with the best-performing 
frontier model reaching only 33 \citep{jackson2025aaomniscience}. Uncertainty 
quantification (UQ) offers a principled path forward, and a wide range of methods 
have been proposed, from token-level entropy \citep{zhang2025sledselflogitsevolution} 
and sampling-based consistency \citep{farquhar2024detecting, yadkori2024conformalabstention} 
to evidential \citep{sensoy2018evidentialdeeplearningquantify, ma2025estimating} and 
attention-based approaches \citep{sriramanan2024llm, skean2025layer}; yet few are 
deployed in practice.  While other methods impose at least one of the following barriers of multiple 
generation passes, task-specific supervision, sensitivity to response length, or 
architectural assumptions, we argue that a practical UQ score should be \emph{unsupervised}, 
computed in a \emph{single forward pass}, \emph{length-normalized}, and 
\emph{architecture-agnostic}. 

The linear representation hypothesis \citep{park2023linear} posits that high-level 
concepts are encoded as directions in the hidden-state space of LLMs. When a model 
retrieves a well-encoded fact, its hidden states move in a tight, consistent direction 
implying that the geometry of retrieval is compact. When it lacks reliable factual grounding, the hidden states wander through loosely related directions as the model generates 
together a plausible-sounding reply. Consider a model asked for the birth year of 
a figure it has not reliably encoded. It may produce a confident four-digit number 
while its representations drift through adjacent concepts rather than converging on 
a specific memory. We observe that this wandering is measurable. Motivated by \citet{skean2025layer}, 
who shows that intermediate hidden layers encode richer representations than the final 
layer and that information-theoretic geometry metrics effectively characterize 
representation quality, we quantify this drift via the entropy of eigenvalue distribution from the hidden-state trajectory. This geometric signal is computed directly from the hidden states of 
standard greedy generation with no additional forward pass.

The distinction between the embedding (hidden state vector space) and unembedding layers (next-token prediction) is not merely 
architectural. \citet{park2023linear} show that the embedding space obeys a 
non-Euclidean geometry in which concepts are encoded as directions, and that this 
structure is fundamentally different from the output space where token probabilities 
are computed. This separation provides a principled basis for why token-level entropy 
and hidden-state geometric complexity measure different things: one operates in the space of next-token distributions, the other in the space of semantic directions, and the two need not agree.  
Since the two failure modes arise from structurally distinct layers, we term them 
\emph{global} (hidden-state geometric complexity) and \emph{local} (token-level 
entropy) uncertainty, and ask how best to combine them. Figure~\ref{fig:motivation} 
confirms that both signals are individually informative yet neither is sufficient. Correct responses cluster tightly in the low local-uncertainty region while incorrect responses concentrate at high global uncertainty, but the low-$x$ region where token-level methods declare confidence still contains a non-trivial fraction of incorrect responses that token entropy alone cannot recover. Jointly, the two signals 
span a far wider discriminative range than either achieves alone. We study additive 
and multiplicative fusion strategies across architectures and tasks, finding that 
multiplicative combination consistently dominates.


\textbf{This paper makes the following contributions: }(1) We extract complementary uncertainty  signals from two distinct representational layers: a global geometric signal from 
the embedding layer via matrix Rényi entropy of hidden-state trajectories, and a 
local token-level signal from the unembedding layer via Shannon and evidential 
epistemic entropy. Together, they provide a principled, multi-view perspective on 
LLM uncertainty that goes beyond logit-only approaches. (2) We show empirically 
that the two signals are statistically near-orthogonal and cover distinct uncertainty 
regimes. In particular, the global signal recovers the \emph{confident-but-wrong} 
failure mode that local signals systematically miss. (3) We propose 
Global-Local Uncertainty (\textbf{GLU}) , a lightweight multiplicative fusion of the two signals 
that requires no labeled data, no additional forward passes, and is normalized for 
response length. (4) We validate GLU across three model families and six 
benchmarks, and conduct comprehensive ablation studies on the choice of local and 
global uncertainty estimators, demonstrating both the effectiveness of the framework 
and the individual contribution of each component.

\section{Related Work}

\paragraph{Local and output-distributional uncertainty.}
A common approach estimates LLM reliability from generation-time signals such as
sequence likelihood, token entropy, model-elicited confidence, or probabilistic
views of next-token inference \citep{kadavath2022language, dalal2024,
ma2025estimating}. LogTokU is especially relevant as it argues that probabilities
alone can lose evidence-strength information thus instead estimates token-level
aleatoric and epistemic uncertainty directly from logits, building on the broader
evidential-learning view that uncertainty can be represented through evidence over
class probabilities \citep{sensoy2018evidentialdeeplearningquantify, ma2025estimating}. Zhou et al.
further show that token-level uncertainty is useful for confabulation detection,
using it to select and aggregate hidden states for response-level reliability
prediction \citep{zhou2026can}. Other methods use local distributional features
during decoding, such as layer-wise logit contrast, or generate multiple samples
and measure agreement or semantic dispersion
\citep{zhang2025sledselflogitsevolution, manakul2023selfcheckgpt,
farquhar2024detecting, nikitin2024kernel, yadkori2024conformalabstention}.
These approaches are effective and often minimally intrusive, but they primarily
measure uncertainty in tokens or sampled outputs rather than the global coherence
of the model's internal trajectory.

\paragraph{Representation-based and global uncertainty.}
A complementary line of work studies whether reliability is encoded in hidden
states, layer activations, representation geometry, or attention behavior.
Layer-by-Layer shows that intermediate layers can contain especially informative
representations and analyzes them using information-theoretic, geometric, and
invariance-based measures \citep{skean2025layer}. RAUQ similarly moves beyond
output probabilities by identifying uncertainty-aware attention heads whose
attention patterns correlate with incorrect generations, producing efficient
sequence-level uncertainty estimates without task-specific labels
\citep{vazhentsev2026efficient}. More broadly, probing and representation-based
methods use hidden states or activations to predict factuality, confabulation, or
response reliability
\citep{li2024confidencemattersrevisitingintrinsic, orgad2024llms, zhou2026can,
sriramanan2024llm}. These methods capture response-level failures that token
probabilities can miss, but representation-only signals may overlook local
knowledge gaps already visible in the output distribution.

\paragraph{Supervised detectors and calibration dependence.}
Several hallucination detectors learn reliability decision boundaries from labeled
or pseudo-labeled generations
\citep{li2024confidencemattersrevisitingintrinsic, orgad2024llms, zhou2026can,
du2024haloscope}. While effective in-distribution, these methods depend on
calibration data and may be sensitive to shifts across datasets, model families,
or generation settings.We include a supervised-detector comparison in
Appendix~\ref{app:supervisedvs}.

GLU instead combines local and global uncertainty without learning a detector.
It uses hidden-state geometry to amplify token-level uncertainty when the response
trajectory is diffuse, capturing both local uncertainty and response-level drift
without labeled calibration data, pseudo-labels, or retraining.
\section{Preliminary}
\label{sec:preliminary}
We formally introduce the notation of this paper. Let $\v+x$ denote a prompt and $\v+y=(y_1,\dots,y_T)$ denote a generated response of length $T$, where each token $y_t$ is drawn from a vocabulary $\mathcal{V}$ of size $V$. At generation step $t$ and transformer layer $\ell$, the language model produces a hidden-state vector $\v+h_t^{(\ell)} \in \mathbb{R}^d$. The model then autoregressively predicts the next token using the final-layer representation. Specifically, the final hidden state is mapped to the logit vector
\[
\v+z_t = \m+W_U \v+h_t^{(L)} \in \mathbb{R}^V,
\]
where $\m+W_U$ is the unembedding matrix. The conditional distribution over the next token is given by
\[
p(y_t \mid \v+x, \v+y_{<t}) = \mathrm{softmax}(\v+z_t).
\]

Next, we collect hidden states across all $T$ generation steps at layer $\ell$ into the \emph{representation matrix}
\begin{equation}
    \m+H^{\ell} = \begin{bmatrix} \v+h_1^{\ell)} \\ \vdots \\ \v+h_T^{(\ell)} \end{bmatrix}
    \in \mathbb{R}^{T \times d},
\end{equation}
where each row encodes the model's internal state at one generation step.
More concretely, $\m+H$ encodes the full \emph{trajectory} of the response in representation space, the sequence of rows traces how the model's internal representation evolves as it generates each token. We simply write $\m+H$ when there is no ambiguity. 

We define the \emph{Gram matrix} of the response as
\begin{equation}
    \m+K  = \m+H \m+H^\top \in \mathbb{R}^{T \times T},
\end{equation}
whose $(i,j)$-th entry $\m+K_{ij} = \langle \v+h_i^{(L)}, \v+h_j^{(L)} \rangle$
measures the similarity between the representations at steps $i$ and $j$.
Intuitively, $\m+K$ captures the geometric structure of the response
trajectory: when the model generates a coherent response, the hidden states tend to concentrate along a few dominant
directions, making $\m+K$ low-rank; when the response is uncertain or incoherent, the hidden states scatter across many directions, making $\m+K$ closer to a scaled identity.
\section{Global and Local Uncertainty of LLMs}
\label{sec:method}
Uncertainty in LLM generation appears at two complementary levels. At the
\emph{token level}, the model may be uncertain over plausible next tokens,
capturing local aleatoric or epistemic uncertainty. At the \emph{response level},
the hidden-state trajectory $\m+H$ may become geometrically complex, spreading
across multiple directions in latent space where this structure is captured by the
eigenvalues and eigenvectors of its representation geometry \cite{skean2025layer}. These signals are
not redundant; we observe that a response can have confident token predictions while its global trajectory drifts in Figure \ref{fig:motivation}, producing a plausible but hallucinated answer. We therefore
measure both token-level uncertainty (local) and geometric entropy (global).

\begin{figure*}[t]
  \centering
  \includegraphics[width=0.8\linewidth]{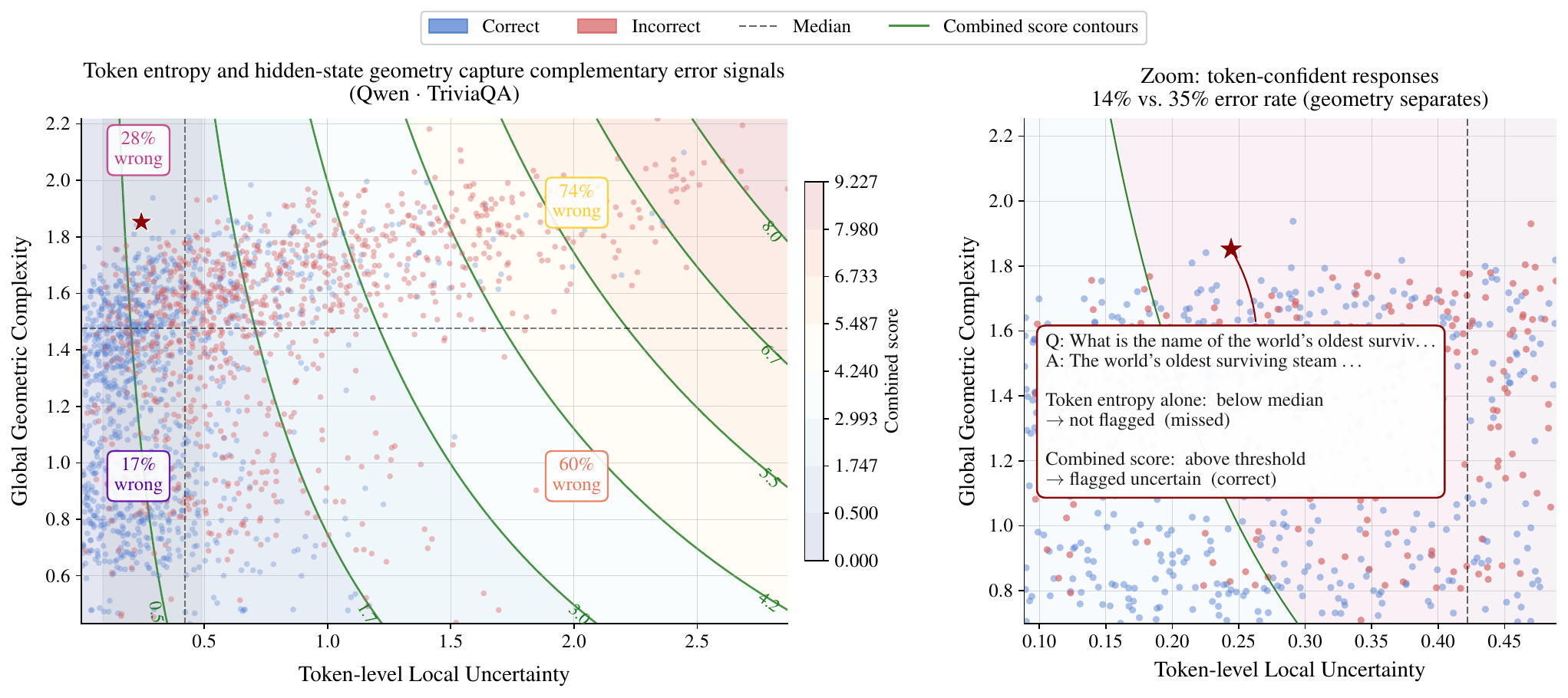}
  \caption{\textbf{Local and global signals capture complementary uncertainty
  information.} Each point is a response from Qwen~2.5-7B on TriviaQA
  (blue = correct, red = incorrect). $x$-axis: mean Shannon entropy over the
  most uncertain tokens (local). $y$-axis: geometric complexity of the
  hidden-state trajectory (global). Contours show the product of the two
  signals, used only for visualization. \textit{Left:} token entropy provides
  the primary separation, while geometry further discriminates within each
  regime; quadrant error rates confirm the two views jointly span a wider
  uncertainty range than either alone. \textit{Right (zoom):} among
  below-median-entropy responses, geometry still separates many incorrect
  ones. The starred example, low local but high global uncertainty, is the
  failure mode that motivates fusing both views.}
  \label{fig:motivation}
\end{figure*}

\subsection{Local Uncertainty: Token-level Entropy}
We first introduce token-level uncertainty measures to characterize local uncertainty during generation. Following the evidential deep learning framework~\cite{sensoy2018evidentialdeeplearningquantify,ma2025estimating}, we decompose uncertainty into an aleatoric component (AU), capturing distributional ambiguity, and an epistemic component (EU), capturing lack of evidence.

For each generated token $y_t$, let $\tau_{(1)},\dots,\tau_{(k)}$ denote the indices of the top-$k$ vocabulary candidates ranked by logit value. We define a truncated predictive distribution over these candidates by renormalizing their logits:
\begin{align}
p_j = \frac{e^{z_{\tau_{(j)}}}}{\sum_{j'=1}^k e^{z_{\tau_{(j')}}}}.
\end{align}
The aleatoric uncertainty is then defined as the Shannon entropy of this truncated distribution:
\begin{align}
\mathrm{AU}(t)
= -\sum_{j=1}^k p_j \log_2 p_j.
\label{eq:shannon_au}
\end{align}

To estimate epistemic uncertainty, we follow the evidential interpretation of logits as evidence supporting different candidate tokens. Specifically, we map each logit to a positive Dirichlet concentration parameter using the softplus transformation:
\begin{align}
\alpha_j = \ln(1 + e^{z_{\tau_{(j)}}}).
\end{align}
Compared with the ReLU mapping used in prior work~\cite{ma2025estimating}, the softplus function guarantees strictly positive concentration parameters, which is consistent with the Dirichlet parameterization. We therefore define epistemic uncertainty as the inverse Dirichlet strength:
\begin{align}
\mathrm{EU}(t)
= \frac{k}{\sum_{j=1}^k (\alpha_j + 1)},
\label{eq:logtoku_eu}
\end{align}
where the denominator denotes the total Dirichlet strength. Intuitively, larger total evidence implies lower epistemic uncertainty. 

A token is considered locally uncertain when both its predictive distribution is diffuse (high AU) and its supporting evidence is weak (high EU). We combine these two complementary factors using a multiplicative interaction score:
\begin{align}
R(t) = -\mathrm{AU}(t)\cdot \mathrm{EU}(t).
\end{align}

Averaging uncertainty uniformly across all generated tokens can dilute salient uncertain positions due to low-information or highly predictable tokens (e.g., stop words or formatting tokens). We therefore focus on the $m$ most uncertain positions. Let $\mathcal{W}_m$ denote the set of token indices corresponding to the top-$m$ values of $R(t)$. We define the aggregated local uncertainty score as
\begin{align}
\bar{R}
= \frac{1}{m}\sum_{t\in\mathcal{W}_m} R(t).
\end{align}

\subsection{Global Uncertainty: R\'enyi Entropy of Hidden-State Gram Matrices}
 
We define \emph{geometric entropy} as the R\'enyi entropy of the normalized
eigenspectrum of the hidden-state Gram matrix, quantifying the effective
dispersion of representations across latent geometric directions.
 
We characterize global uncertainty through the geometric complexity of
hidden-state representations.  As established in
Section~\ref{sec:preliminary}, the Gram matrix
$\m+K^{(\ell)}=\m+H^{(\ell)}{\m+H^{(\ell)}}^{\!\top}\in\mathbb{R}^{T\times T}$
encodes the pairwise similarity structure of the response trajectory at
layer $\ell$.  Its eigenvalues $\lambda_1\ge\cdots\ge\lambda_T\ge 0$
characterize how the trajectory's variance is distributed across orthogonal
directions in representation space: a sharply peaked spectrum reflects a
low-rank, coherent trajectory, while a flat spectrum reflects a diffuse,
uncertain one.  We now convert this spectral structure into a single scalar
uncertainty score.
 
\paragraph{Entropy of the normalized spectrum.}
Normalizing the eigenvalues as
$\tilde{\lambda}_i=\lambda_i/\mathrm{tr}(\m+K^{(\ell)})$ yields a
probability distribution over principal directions.  We measure its
dispersion via the order-$\alpha$ R\'enyi entropy of that distribution~\cite{skean2025layer}:
\begin{equation}\label{eq:renyi}
  S_\alpha(\m+K)
  \;=\;
  \frac{1}{1-\alpha}
  \log\!\Bigl(\textstyle\sum_{i=1}^{T}\tilde{\lambda}_i^{\,\alpha}\Bigr),
  \qquad \alpha>0,\;\alpha\neq 1.
\end{equation}

We use $\alpha=2$ (collision entropy), which admits the closed form

\begin{equation}\label{eq:s2}
  S_2(\m+K)
  \;=\;
  -\log\mathrm{tr}\!\left(\!\left(\frac{\m+K}{\mathrm{tr}(\m+K)}\right)^{\!2}\right)
  \;=\;
  -\log\frac{\|\m+K\|_F^2}{\mathrm{tr}(\m+K)^2},
\end{equation}

and therefore requires only the squared Frobenius norm of $\m+K$ rather than
its full eigendecomposition.  Forming $\m+K=\m+H{\m+H}^{\!\top}$ costs
$O(T^2 d)$, after which $\|\m+K\|_F^2$ is computed in $O(T^2)$; the
eigendecomposition alternative would cost $O(T^3)$.  This advantage is
substantial in practice since $d\gg T$ for large language models (hidden
dimension $d$ in the thousands, response length $T$ in the tens to
hundreds).  Low $S_2$ indicates that the trajectory concentrates along a
few dominant directions (coherent generation); high $S_2$ indicates a
diffuse trajectory (geometrically incoherent generation). We give the detailed proof in Appendix \ref{app:entropy-proof}.
 
\paragraph{Normalization for cross-response and cross-model comparability.}
We adopt two normalizations to make $\tilde{S}$ comparable across responses and models.
First, $S_2(\m+K^{(\ell)})$ grows with $T$ and attains its maximum of $\log T$ when the spectrum is uniform ($\tilde{\lambda}_i=1/T$); we therefore divide by $1+\log T$ so the normalized score lies in $[0,1)$ regardless of response length.
Second, because representation quality varies with depth and no single layer is uniformly informative across architectures, we average the length-normalized entropy over all $L$ layers to obtain
\begin{equation}\label{eq:stilde}
  \tilde{S}
  \;=\;
  \frac{1}{L}\sum_{\ell=1}^{L}
  \frac{S_2\!\left(\m+K^{(\ell)}\right)}{\log T}
  \;\in\; [0,1].
\end{equation}
This layer-agnostic formulation avoids model-specific layer selection and yields a single, response-level uncertainty score computable in one forward pass.


\subsection{Global-Local Fusion}
\label{sec:glu}
The two signals cover distinct regimes (Fig.~\ref{fig:motivation}):
correct responses cluster at low local uncertainty but spread broadly along
$\tilde{S}$, while incorrect responses include a \emph{confident-but-wrong}
subset which has low token entropy yet elevated geometric complexity, that token-level
detectors miss entirely. We fuse the two via a multiplicative gate:
\begin{equation}
\label{eq:glu}
\boxed{\;\mathrm{GLU} \;=\; (1+\tilde{S})\,\bar{R}\;}
\end{equation}
The factor $(1+\tilde{S})$ acts as a geometric amplifier, such that, when the trajectory
is diffuse, local uncertainty is up-weighted; when compact, GLU reduces to the
local signal alone. This directly targets the confident-but-wrong failure mode,
which additive fusion cannot isolate because it treats the global signal as a
fixed offset rather than a modulator.
\section{Experiments and Ablation Studies}
\label{sec:experiments}

In this section, we evaluate GLU across six benchmarks and three model families. To empirically validate the design choices motivated in Section~\ref{sec:glu}, we conduct comprehensive ablation studies over local and global uncertainty components, isolating the contribution of each and confirming that the multiplicative geometric–probabilistic combination outperforms alternatives.

\paragraph{Datasets and Models.}
We evaluate on six benchmarks spanning factuality, knowledge retrieval, mathematical reasoning, Arabic QA, long-form generation, and multi-turn reasoning:
\textit{TruthfulQA}~\cite{lin2022truthfulqa}, 
\textit{TriviaQA}~\cite{joshi2017triviaqa}, 
\textit{MATH}~\cite{hendrycks2021math}, 
\textit{ArabicaQA}~\cite{abdallah2024arabicaqa}, 
\textit{LongForm}~\cite{koksal2023longform}, and the \textit{GSM8K multi-turn} setting from \citet{laban2025llms}. Further information on the composition of the datasets are provided in Appendix~\ref{app:datasets}.
We evaluate three instruction-tuned model families:
\textit{Qwen2.5-7B}~\cite{qwen2025qwen25}, 
\textit{Gemma3-12B}~\cite{gemma2025gemma3}, and 
\textit{Fanar1-9B}~\cite{fanar2025}, 
an Arabic-centric Arabic--English model continually pretrained from Gemma-9B, allowing us to test whether GLU remains effective across different model families, languages, and latent-space configurations.

\paragraph{Evaluation protocol.}
We adopt greedy decoding to generate responses for each model--benchmark pair, and store the corresponding hidden states, logits, and probability distributions. All uncertainty estimation methods are applied post hoc to this shared set of model outputs, ensuring a fair and controlled comparison.
We report two complementary evaluation metrics.
AUROC measures how well an uncertainty score discriminates between correct and incorrect responses (higher is better; $0.5$ corresponds to random guessing). The prediction rejection ratio (PRR) is a more recent metric that quantifies
how well an uncertainty score supports selective abstention; we use it because it is the primary
metric adopted by RAUQ~\citep{vazhentsev2026efficient}, and we refer the reader to
\citet{malinin2021uncertainty} for its full definition. The two metrics capture complementary aspects of performance: AUROC reflects discriminative capability, whereas PRR assesses whether the uncertainty ranking is effective for selective prediction.

\paragraph{Baselines and Ablations.}
The baselines cover the major families of unsupervised uncertainty estimation methods, including LogProb\cite{yadkori2024iterativeprompting} and LogTokU (local token-level methods) \cite{ma2025estimating}, P(True) (self-evaluation) \cite{kadavath2022language}, and RAUQ (attention-based uncertainty estimation) \cite{vazhentsev2026efficient}.
The GLU ablations isolate the contribution of individual components in Eq.~\ref{eq:glu}. Full breakdown of the ablations is summarize in Table~\ref{tab:methods_summary}.

\begin{table*}
\centering
\scriptsize
\caption{Methods and ablations. For GLU variants, the final score combines a global geometric
uncertainty term $S$ with a local token-level uncertainty term $\bar{u}$ as
$\mathrm{GLU}(x,y)=(1+S)\bar{u}$, where larger uncertainty corresponds to a more negative score.}
\label{tab:methods_summary}
\setlength{\tabcolsep}{4pt}
\renewcommand{\arraystretch}{1.12}
\begin{tabularx}{\linewidth}{@{}l c c X@{}}
\toprule
\textbf{Method} & \textbf{Global} & \textbf{Local} & \textbf{Explanation} \\
\midrule

\multicolumn{4}{@{}l}{\textit{Baselines}} \\[2pt]

LogProb
  & --
  & $\frac{1}{T}\sum_{t=1}^{T}\log p(y_t \mid y_{<t},x)$
  & Average token log-probability of the generated response. \\

LogTokU
  & --
  & $-\mathrm{AU}_{\mathrm{EDL}}(t)\cdot \mathrm{EU}(t)$
  & Token-level uncertainty baseline using Dirichlet aleatoric uncertainty
    and epistemic uncertainty~\citep{ma2025estimating}. \\

P(True)
  & $p(\text{``True''}\mid x,y)$
  & --
  & The model self-evaluates whether its generated answer is
    true~\citep{kadavath2022language}. \\

RAUQ
  & $\mathrm{RAUQ}(x,y)$
  & --
  & Recurrently fuses uncertainty-aware attention-head activations with
    token probabilities; peaks over a middle-layer
    subset~\citep{vazhentsev2026efficient}. \\

\midrule
\multicolumn{4}{@{}l}{\textit{Ablations on Eq.\ref{eq:glu}}} \\[2pt]

GLU
  & $\tilde{S}=\dfrac{1}{L}\!\sum_{\ell=1}^{L}\dfrac{S_\alpha^\ell}{1+\log T}$
  & $\bar{u}_{\mathrm{ShE}\times\mathrm{EU}}
      = \frac{1}{k}\!\sum_{t \in \mathcal{K}}
        {-\mathrm{SE}(t)\,\mathrm{EU}(t)}$
  & Length-normalised matrix R\'enyi-2 entropy averaged across all $L$
    layers, combined with the mean of the $k$ most uncertain local
    scores. \\

GLU-EDL
  & $\tilde{S}$
  & $\bar{u}_{\mathrm{EDL}}
      = \frac{1}{k}\!\sum_{t \in \mathcal{K}}
        {-\mathrm{AU}_{\mathrm{EDL}}(t)\,\mathrm{EU}(t)}$
  & Replaces the local ShETokU term with the LogTokU-style Dirichlet
    token uncertainty. \\

GLU-$\bar{S}$
  & $\bar{S}_\alpha=\dfrac{1}{L}\!\sum_{\ell=1}^{L} S_\alpha^\ell$
  & $\bar{u}_{\mathrm{ShE}\times\mathrm{EU}}$
  & Removes the $1/(1{+}\log T)$ length normalisation; both $\tilde{S}$
    and $\bar{S}_\alpha$ average R\'enyi entropy across all $L$
    layers. \\

GLU-$\tilde{S}^*$
  & $\max_{\ell}\,\tilde{S}^{\ell}$
  & $\bar{u}_{\mathrm{ShE}\times\mathrm{EU}}$
  & Selects the single most uncertain length-normalised layer instead of
    averaging over all $L$ layers as in $\tilde{S}$. \\

GLU-SP
  & $\tilde{S}$
  & $\bar{u}_{\mathrm{ShE}\times\mathrm{EU}^{\mathrm{sp}}}
      = \frac{1}{k}\!\sum_{t \in \mathcal{K}}
        {-\mathrm{SE}(t)\,\mathrm{EU}^{\mathrm{sp}}(t)}$
  & Uses softplus evidence $\alpha_k\!=\!\mathrm{softplus}(\ell_k)+\varepsilon$
    instead of $\mathrm{ReLU}(\ell_k)+1$ in the Dirichlet epistemic
    uncertainty term. \\

GLU-DK
  & $\tilde{S}$
  & $\bar{u}_{\mathrm{ShE}\times\mathrm{EU}}^{\,k'}$,\;
    $k'=\left\lfloor k/(1+\tilde{S})\right\rfloor$
  & Uses an adaptive local window size, shrinking the number of selected
    uncertain tokens as global geometric uncertainty increases. \\

GLU-AU
  & $\tilde{S}$
  & $\bar{u}_{\mathrm{ShE}}
      = \frac{1}{k}\!\sum_{t \in \mathcal{K}}{-\mathrm{SE}(t)}$
  & Drops the epistemic factor entirely; retains only the local Shannon
    entropy term amplified by $\tilde{S}$. \\

GLU-EU
  & $\tilde{S}$
  & $\bar{u}_{\mathrm{EU}}
      = \frac{1}{k}\!\sum_{t \in \mathcal{K}}{-\mathrm{EU}(t)}$
  & Drops Shannon entropy entirely; retains only the Dirichlet epistemic
    uncertainty term amplified by $\tilde{S}$. \\

GLU-$S_\alpha$-AU
  & $\bar{S}_\alpha$
  & $\bar{u}_{\mathrm{ShE}}
      = \frac{1}{k}\!\sum_{t \in \mathcal{K}}{-\mathrm{SE}(t)}$
  & Jointly ablates length normalisation and the epistemic factor;
    tests whether raw multi-layer geometry pairs best with Shannon-only
    local uncertainty. \\

GLU-$S_\alpha$-SP
  & $\bar{S}_\alpha$
  & $\bar{u}_{\mathrm{ShE}\times\mathrm{EU}^{\mathrm{sp}}}
      = \frac{1}{k}\!\sum_{t \in \mathcal{K}}
        {-\mathrm{SE}(t)\,\mathrm{EU}^{\mathrm{sp}}(t)}$
  & Tests whether softplus evidence pairs better with unnormalised
    depth-averaged entropy than with $\tilde{S}$. \\

GLU-$S^*$
  & $S_\alpha^*=\max_{\ell} S_\alpha^\ell$
  & $\bar{u}_{\mathrm{ShE}\times\mathrm{EU}}$
  & Uses the single most uncertain raw layer instead of the depth-averaged
    global entropy. \\

GLU-EU-SP
  & $\tilde{S}$
  & $\bar{u}_{\mathrm{EU}^{sp}}
      = \frac{1}{k}\sum_{t \in \mathcal{K}}-\mathrm{EU}^{sp}(t)$
  & Uses only the softplus-based evidential uncertainty term, dropping the
    Shannon entropy factor. \\

GLU-$S_\alpha$-SP-EU
  & $\bar{S}_\alpha$
  & $\bar{u}_{\mathrm{EU}^{sp}}
      = \frac{1}{k}\sum_{t \in \mathcal{K}}-\mathrm{EU}^{sp}(t)$
  & Combines raw depth-averaged matrix R\'enyi entropy with the softplus-based
    evidential-only local score. \\
\midrule
\multicolumn{4}{@{}l}{\textit{Additive fusion variants}} \\[2pt]

Add-$S_\alpha$
  & $\bar{S}_\alpha$
  & $\bar{u}_{\mathrm{ShE}\times\mathrm{EU}}$
  & Uses additive fusion, $\bar{S}_\alpha+\bar{u}_{\mathrm{ShE}\times\mathrm{EU}}$,
    instead of the multiplicative GLU interaction. \\

Add-$\tilde{S}$
  & $\tilde{S}$
  & $\bar{u}_{\mathrm{ShE}\times\mathrm{EU}}$
  & Uses additive fusion, $\tilde{S}+\bar{u}_{\mathrm{ShE}\times\mathrm{EU}}$,
    with length-normalized geometry. \\
\bottomrule
\end{tabularx}
\end{table*}

\begin{table}[t]
\centering
\scriptsize
\setlength{\tabcolsep}{4pt}
\renewcommand{\arraystretch}{1.08}
\caption{Response reliability estimation across three models and six datasets, 
evaluated by AUROC and PRR. \textbf{LogTokU} captures a purely \emph{local} signal 
from token-level output logits; \textbf{RAUQ} captures a \emph{global} signal from 
intermediate hidden-state and attention representations; \textbf{GLU} forms their 
\emph{multiplicative} fusion, achieving the best score in 13/18 cells on AUROC and 
PRR and ranking first or second in the remaining cells.
\textbf{Bold}: best per row; \underline{underline}: second best.}
\label{tab:main_results}
\begin{tabular}{ll ccc ccc}
\toprule
& & \multicolumn{3}{c}{\textbf{AUROC}} & \multicolumn{3}{c}{\textbf{PRR}} \\
\cmidrule(lr){3-5} \cmidrule(lr){6-8}
\textbf{Model} & \textbf{Dataset} 
  & \textbf{LogTokU} & \textbf{RAUQ} & \textbf{GLU}
  & \textbf{LogTokU} & \textbf{RAUQ} & \textbf{GLU} \\
\midrule
\multirow{6}{*}{\textbf{Qwen}}
& TruthfulQA & \textbf{0.680} & 0.582 & \underline{0.616} & \textbf{0.377} & 0.063 & \underline{0.195} \\
& TriviaQA   & 0.786 & \underline{0.789} & \textbf{0.834} & \underline{0.606} & 0.581 & \textbf{0.717} \\
& MATH       & 0.595 & \underline{0.699} & \textbf{0.767} & 0.216 & \underline{0.424} & \textbf{0.537} \\
& ArabicaQA  & 0.521 & \textbf{0.610} & \underline{0.556} & $-$0.028 & \textbf{0.186} & \underline{0.140} \\
& Longform   & \underline{0.490} & 0.469 & \textbf{0.496} & $-$0.045 & \underline{$-$0.044} & \textbf{$-$0.040} \\
& Multiturn  & 0.523 & \underline{0.644} & \textbf{0.739} & 0.008 & \underline{0.189} & \textbf{0.407} \\
\midrule
\multirow{6}{*}{\textbf{Gemma}}
& TruthfulQA & \underline{0.531} & 0.466 & \textbf{0.548} & \textbf{0.115} & $-$0.094 & \underline{0.087} \\
& TriviaQA   & 0.597 & \underline{0.700} & \textbf{0.817} & 0.224 & \underline{0.416} & \textbf{0.698} \\
& MATH       & \textbf{0.801} & 0.771 & \underline{0.799} & \underline{0.693} & 0.639 & \textbf{0.699} \\
& ArabicaQA  & 0.538 & \underline{0.642} & \textbf{0.647} & 0.099 & \underline{0.277} & \textbf{0.322} \\
& Longform   & 0.343 & \underline{0.354} & \textbf{0.358} & $-$0.249 & \textbf{$-$0.234} & \underline{$-$0.242} \\
& Multiturn  & 0.457 & \textbf{0.734} & \underline{0.701} & $-$0.274 & \textbf{0.506} & \underline{0.423} \\
\midrule
\multirow{6}{*}{\textbf{Fanar}}
& TruthfulQA & 0.579 & \textbf{0.641} & \textbf{0.641} & 0.192 & \underline{0.282} & \textbf{0.307} \\
& TriviaQA   & 0.706 & \textbf{0.821} & \textbf{0.821} & 0.445 & \underline{0.712} & \textbf{0.730} \\
& MATH       & 0.664 & \underline{0.779} & \textbf{0.788} & 0.324 & \underline{0.624} & \textbf{0.627} \\
& ArabicaQA  & 0.490 & \textbf{0.613} & \underline{0.563} & $-$0.025 & \textbf{0.239} & \underline{0.143} \\
& Longform   & \underline{0.569} & 0.549 & \textbf{0.654} & \underline{0.077} & 0.066 & \textbf{0.200} \\
& Multiturn  & 0.463 & \underline{0.804} & \textbf{0.831} & $-$0.166 & \underline{0.654} & \textbf{0.687} \\
\bottomrule
\end{tabular}
\end{table}

\begin{table}[t]
\centering
\small
\setlength{\tabcolsep}{7pt}
\renewcommand{\arraystretch}{1.14}
\caption{Mean performance across all 18 model--dataset settings, ranked among 14 methods.
The \textbf{Space} column denotes the signal each method uses: L (local, token-level) and
G (global, hidden-state geometry); $\mathrm{L}\times\mathrm{G}$ marks multiplicative fusion
and $\mathrm{L}{+}\mathrm{G}$ additive fusion. GLU ranks first on both mean AUROC ($0.676$)
and mean PRR ($0.369$), above the global-signal baseline RAUQ (rank $12$) and the local-signal
baseline LogTokU (rank $14$). The multiplicative ablations cluster just below it (ranks
$2$--$9$), while the additive variant Add-$\tilde{S}$ falls to rank $13$, isolating the
multiplicative fusion as the source of the gain. Full results and rankings are in
Appendix~\ref{app:methods_and_ablations}.}
\label{tab:mean_all_methods}
\vspace{4pt}
\begin{tabular}{lllcccc}
\toprule
\textbf{Group} & \textbf{Method} & \textbf{Space} & \textbf{AUROC} & \textbf{Rank} & \textbf{PRR} & \textbf{Rank} \\
\midrule
Proposed & \cellcolor{blue!8}\textbf{GLU} & $\mathrm{L}\times\mathrm{G}$ & 0.676 & 1 & 0.369 & 1 \\
\midrule
Baseline & LogProb & L & 0.655 & 10 & 0.326 & 10 \\
Baseline & LogTokU & L & 0.574 & 14 & 0.144 & 14 \\
Baseline & P(true) & G & 0.654 & 11 & 0.311 & 11 \\
Baseline & RAUQ & G & 0.648 & 12 & 0.305 & 12 \\
\midrule
Ablation & GLU-SP & $\mathrm{L}\times\mathrm{G}$ & 0.675 & 3 & 0.367 & 2 \\
Ablation & GLU-$\tilde{S}^*$ & $\mathrm{L}\times\mathrm{G}$ & 0.674 & 4 & 0.366 & 3 \\
Ablation & GLU-DK & $\mathrm{L}\times\mathrm{G}$ & 0.675 & 2 & 0.366 & 4 \\
Ablation & GLU-$\bar{S}$ & $\mathrm{L}\times\mathrm{G}$ & 0.674 & 5 & 0.365 & 5 \\
Ablation & GLU-$S_\alpha$-SP & $\mathrm{L}\times\mathrm{G}$ & 0.673 & 6 & 0.363 & 6 \\
Ablation & GLU-AU & $\mathrm{L}\times\mathrm{G}$ & 0.671 & 7 & 0.362 & 7 \\
Ablation & GLU-$S_\alpha$-AU & $\mathrm{L}\times\mathrm{G}$ & 0.667 & 8 & 0.357 & 8 \\
Ablation & GLU-$S^*$ & $\mathrm{L}\times\mathrm{G}$ & 0.665 & 9 & 0.351 & 9 \\
Ablation & Add-$\tilde{S}$ & $\mathrm{L}{+}\mathrm{G}$ & 0.621 & 13 & 0.259 & 13 \\
\bottomrule
\end{tabular}
\end{table}


\subsection{The Complementary Error Signals}
\label{sec:complementary}
We demonstrate that token-level and global uncertainty signals cover distinct failure 
regimes and together reduce the \emph{confident-but-wrong} failure mode that neither can 
address alone. Figure~\ref{fig:motivation} illustrates this complementarity for 
Qwen~2.5-7B on TriviaQA.

\paragraph{Token-level uncertainty provides strong but incomplete separation.}
The left panel plots each response along two axes: aggregated token-level entropy 
(local uncertainty, $x$-axis) and hidden-state geometric complexity (global 
uncertainty, $y$-axis). Correct responses (blue) cluster tightly in the low 
local-uncertainty region yet spread broadly along the $y$-axis; incorrect responses 
(red) concentrate at high global uncertainty yet spread comparably along the $x$-axis. 
Both signals are individually informative, but neither is sufficient. Crucially, the 
low-$x$ region, where token-level methods declare confidence, still contains a 
non-trivial fraction of incorrect responses that token entropy alone cannot separate 
from correct ones. The error rates across quadrants confirm that the two signals jointly 
span a far wider discriminative range than either achieves alone.

\paragraph{Global geometry recovers confident-but-wrong responses.}
The most consequential failure mode for uncertainty-based error detection is a 
\emph{confident-but-wrong} response \cite{zhou2026can}, where the model generates an incorrect answer while 
maintaining low token-level entropy throughout. These responses occupy the upper-left 
region of the red cluster with low token uncertainty yet elevated geometric complexity. 
Here, the model commits to each token with strong logit evidence, but its hidden-state 
trajectory drifts across many representational directions, signaling latent indecision 
invisible to token-level probes. The starred example makes this concrete: its token 
entropy falls below the median and would go unflagged by any token-level detector, yet 
its elevated geometric complexity pushes the combined score above the decision threshold, 
correctly identifying it as uncertain.

\subsection{Main Results}
Table~\ref{tab:main_results} reports AUROC and PRR for all methods across three 
models and six datasets; we discuss each metric in turn.

GLU achieves the best AUROC in 13 of 18 settings and ranks second in the remainder, confirming that multiplicative fusion of complementary 
signals consistently outperforms either component in isolation. The pattern directly 
validates the complementarity hypothesis from Figure~\ref{fig:motivation}. GLU improves over the local-only baseline LogTokU by up to 22.0\% 
(Gemma-TriviaQA: 0.817 vs.\ 0.597) and over the global-only baseline RAUQ by up to 
11.7\% (Gemma-TriviaQA: 0.817 vs.\ 0.700). 

RAUQ overtakes GLU mainly in settings where the local token-level signal is weak:
ArabicaQA for Qwen and Fanar, and multiturn MATH for Gemma. In these cases,
LogTokU is near chance in AUROC and often has negative PRR, suggesting that token
confidence is poorly aligned with correctness. Since GLU multiplicatively combines
the local and global signals, an uninformative local component can attenuate an
otherwise strong global signal, allowing the pure-global RAUQ baseline to edge
ahead. However, GLU usually remains close to RAUQ in these
cases, indicating that the fusion degrades gracefully rather than failing outright.

The PRR results show that GLU's advantage carries over to selective prediction. GLU achieves the best PRR in 12 of 18 settings and the highest mean PRR overall, indicating that its uncertainty rankings place unreliable responses closer to the rejection tail more effectively than either LogTokU or RAUQ alone. This is important because the practical goal is not only to distinguish correct from incorrect responses, but also to prioritize which outputs should be deferred, flagged, or verified first.

\subsection{Ablation Study}
Table~\ref{tab:mean_all_methods} aggregates performance across all 18 settings and 
reveals three findings. First, GLU ranks first on both mean AUROC (0.676) and mean 
PRR (0.369), confirming that multiplicative fusion of global and local signals 
consistently outperforms any individual component: RAUQ ranks 12th and LogTokU 
14th, showing that neither signal alone is sufficient. Second, all multiplicative 
ablations cluster tightly at ranks 2--9,
indicating that the gain is robust to the choice of local and global uncertainty 
estimator. The performance gain is driven by the fusion principle rather than the specific 
choice of estimator. Third, and most decisively, the 
additive variant Add-$\tilde{S}$ drops to rank 13 (0.621 AUROC, 0.259 PRR), falling 
below all baselines except LogTokU. This sharp degradation isolates multiplicative 
fusion as the source of the gain: treating the global signal as a fixed offset rather 
than a modulator of local uncertainty fails to capture the interaction between the 
two signals that GLU is designed to exploit.

\begin{figure}[t]
  \centering
  \includegraphics[width=\linewidth]{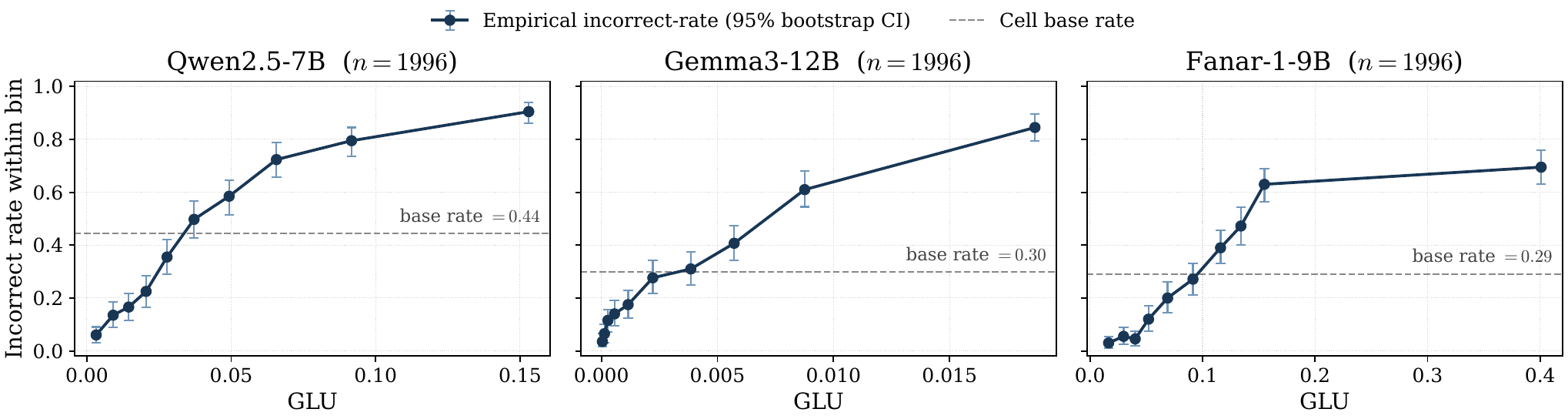}
  \caption{Binned reliability of GLU on TriviaQA. Responses are sorted by
$-\mathrm{GLU}$ and partitioned into deciles; markers report the empirical
incorrect-rate per bin with $95\%$ bootstrap CIs ($1{,}000$ resamples).
Dashed line: cell-level base incorrect-rate. GLU yields a monotonic increase in
incorrect-rate across confidence deciles for all three models, indicating a graded
reliability signal rather than only separating extreme cases.}
  \label{fig:reliability}
\end{figure}

\subsection{GLU on TriviaQA}
\label{sec:reliability}

We further analyze whether GLU provides a calibrated reliability signal on
TriviaQA, beyond its aggregate discrimination performance. For each model, we
sort responses by increasing GLU uncertainty and partition them into deciles. We
then measure the empirical incorrect rate within each bin.

Figure~\ref{fig:reliability} shows a consistent monotonic trend across all three
models: responses assigned higher GLU uncertainty are more likely
to be incorrect. The gap between the lowest- and highest-uncertainty bins reaches
approximately $84$ percentage points for Qwen, $83$ percentage points for Gemma,
and $66$ percentage points for Fanar. These ranges are well above the model-level
base incorrect rates, indicating that GLU does not simply reflect dataset-level
difficulty or a binary correct/incorrect separation.

This result suggests that GLU captures a graded notion of answer reliability.
Low-GLU responses are mostly correct, while high-GLU responses concentrate a
large fraction of errors. Therefore, GLU can support reliability-aware generation
settings in which answers are selectively trusted, flagged, or routed for
additional verification.

\section{Conclusion}

We presented a multi-view perspective on LLM uncertainty grounded in the structural 
distinction between the embedding and unembedding layers. From this perspective, we 
extracted two complementary signals: a global geometric uncertainty from hidden-state 
trajectories, quantified via matrix Rényi entropy, and a local token-level uncertainty 
from the unembedding layer, based on Shannon and evidential epistemic entropy. We 
showed empirically that the two signals are complementary and cover 
distinct failure regimes, with the global signal recovering the confident-but-wrong 
responses that token-level methods systematically miss. Building on this 
complementarity, we proposed \textbf{GLU}, a lightweight multiplicative fusion that 
requires no labeled data, no additional forward passes, and is normalized for response 
length. Across three model families and three benchmarks, GLU matches or outperforms 
all unsupervised baselines and remains competitive with supervised methods that lack 
cross-dataset generalization. Ablation studies further confirm that the performance 
gain is robust to the choice of local and global uncertainty estimators, suggesting 
that the complementarity between the two layers is a structural property of LLMs 
rather than an artifact of any particular design choice.


\paragraph{Limitations.} 
GLU requires access to hidden states and token-level output distributions, making it most directly applicable to open-weight models. While GLU achieves the best mean AUROC and PRR across our evaluation, the relative strength of global and local signals varies by task; in some settings, RAUQ remains highly competitive, suggesting that adaptive fusion could further improve robustness. Future work should explore principled layer selection, adaptive weighting, and validation on larger models, additional languages, and domain-specific deployments.

\bibliographystyle{unsrtnat}
\bibliography{references}

\clearpage
\appendix



\begin{table}[t]
\centering
\caption{\textbf{Classifier-based:} AUROC for response reliability estimation across three benchmarks and three model
families. \textbf{Bold}: best per column per model. \underline{Underline}: second best. Although classifier-based methods output better scores, they are not generalizable. 
\textit{Italic}: third best.}
\label{tab:classifier_results}
\small
\begin{tabular}{llccc}
\toprule
\textbf{Model} & \textbf{Method} & \textbf{TruthfulQA} & \textbf{TriviaQA} & \textbf{Math} \\
\midrule
\multirow{6}{*}{\textbf{Qwen}}
    & \textbf{Prob(Exact)}      & 0.758 & 0.781 & 0.627 \\
    & \textbf{Probe(EOS)}       & \textbf{0.794} & \underline{0.812} & 0.703 \\
    & \textbf{Probe(EU)}        & 0.759 & 0.754 & 0.646 \\
    & \textbf{Probe(AVG)}       & \underline{0.761} & 0.786 & 0.699 \\
    & \textbf{RespEntropy}      & 0.705 & 0.793 & \textbf{0.888} \\
    & \textbf{GLU}              & 0.616 & \textbf{0.834} & \underline{0.767} \\
\midrule
\multirow{6}{*}{\textbf{Gemma}}
    & \textbf{Prob(Exact)}      & \underline{0.728} & 0.796 & 0.773 \\
    & \textbf{Probe(EOS)}       & \underline{0.728} & 0.810 & \underline{0.834} \\
    & \textbf{Probe(EU)}        & 0.687 & 0.751 & 0.669 \\
    & \textbf{Probe(AVG)}       & \textbf{0.733} & \textbf{0.818} & 0.786 \\
    & \textbf{RespEntropy}      & 0.708 & 0.775 & \textbf{0.867} \\
    & \textbf{GLU}              & 0.548 & \underline{0.817} & 0.799 \\
\midrule
\multirow{6}{*}{\textbf{Fanar}}
    & \textbf{Prob(Exact)}      & 0.711 & 0.783 & 0.827 \\
    & \textbf{Probe(EOS)}       & 0.706 & 0.739 & 0.790 \\
    & \textbf{Probe(EU)}        & 0.709 & 0.751 & 0.794 \\
    & \textbf{Probe(AVG)}       & \underline{0.734} & 0.765 & \underline{0.833} \\
    & \textbf{RespEntropy}      & \textbf{0.753} & \textbf{0.885} & \textbf{0.842} \\
    & \textbf{GLU}              & 0.641 & \underline{0.821} & 0.788 \\
\bottomrule
\end{tabular}
\end{table} 

\section{Supervised vs Unsupervised}
\label{app:supervisedvs}

GLU requires no labeled data at test time, which is its primary practical
advantage.  To give a sense of what that constraint costs, we include here
results from classifier-based methods that \emph{do} have access to
ground-truth correctness labels during training.  The comparison is
deliberately asymmetric, we are not claiming parity, but it lets
readers calibrate how much headroom remains above our unsupervised scores
and confirms that GLU is already competitive with several supervised probes
despite operating in a fundamentally harder setting.

\section{Proof of the Collision-Entropy Closed Form}
\label{app:entropy-proof}
 
We prove that the order-$2$ R\'enyi entropy of the normalised eigenspectrum
of a Gram matrix $\m+K\in\mathbb{R}^{T\times T}$ admits the two equivalent
closed forms stated in \eqref{eq:s2}.
 
\begin{proposition}[Collision-entropy closed form]
\label{prop:s2-closed}
Let $\m+K=\m+H\m+H^{\!\top}$ be the Gram matrix of hidden states
$\m+H\in\mathbb{R}^{T\times d}$, with eigenvalues
$\lambda_1\ge\cdots\ge\lambda_T\ge 0$.
Define normalised eigenvalues $\tilde\lambda_i=\lambda_i/\mathrm{tr}(\m+K)$.
Then
\begin{equation}
  S_2(\m+K)
  \;=\;
  -\log\!\Bigl(\textstyle\sum_{i=1}^T\tilde\lambda_i^2\Bigr)
  \;=\;
  -\log\,\mathrm{tr}\!\left(\!\left(\frac{\m+K}{\mathrm{tr}(\m+K)}\right)^{\!2}\right)
  \;=\;
  -\log\frac{\|\m+K\|_F^2}{\mathrm{tr}(\m+K)^2}.
  \label{eq:s2-equiv}
\end{equation}
\end{proposition}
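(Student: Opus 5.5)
The three closed forms follow from the spectral theorem for the symmetric positive semidefinite matrix $\m+K$, plus the identities $\mathrm{tr}(\m+A^2)=\sum_i\mu_i^2=\|\m+A\|_F^2$ for real symmetric $\m+A$ with eigenvalues $\mu_i$. We assume $\mathrm{tr}(\m+K)>0$, i.e.\ $\m+H\neq 0$; otherwise the normalisation $\tilde\lambda_i$ is undefined. Under this assumption the $\tilde\lambda_i$ are nonnegative and sum to one, so \eqref{eq:renyi} applies.

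\textbf{First equality.} This is simply the definition \eqref{eq:renyi} at $\alpha=2$: the prefactor $1/(1-\alpha)$ equals $-1$, giving $S_2(\m+K)=-\log\sum_i\tilde\lambda_i^2$.

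\textbf{Second equality.} Since $\m+K=\m+H\m+H^\top$ is real symmetric, write $\m+K=\m+U\Lambda\m+U^\top$ with $\m+U$ orthogonal and $\Lambda=\mathrm{diag}(\lambda_1,\dots,\lambda_T)$. Positive semidefiniteness, from $\v+v^\top\m+K\v+v=\|\m+H^\top\v+v\|^2\ge 0$, gives $\lambda_i\ge0$. The trace is orthogonally invariant, so $\mathrm{tr}(\m+K)=\sum_i\lambda_i$. Setting $\m+A=\m+K/\mathrm{tr}(\m+K)=\m+U\,\mathrm{diag}(\tilde\lambda_i)\,\m+U^\top$, we get $\m+A^2=\m+U\,\mathrm{diag}(\tilde\lambda_i^2)\,\m+U^\top$, because $\m+U^\top\m+U=\m+I$. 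Cyclicity of the trace then yields $\mathrm{tr}(\m+A^2)=\sum_i\tilde\lambda_i^2$.

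\textbf{Third equality.} For any real matrix, $\|\m+K\|_F^2=\sum_{ij}\m+K_{ij}^2=\mathrm{tr}(\m+K^\top\m+K)$, and symmetry gives $\mathrm{tr}(\m+K^\top\m+K)=\mathrm{tr}(\m+K^2)$. Because the trace is linear, $\mathrm{tr}\bigl((\m+K/c)^2\bigr)=\mathrm{tr}(\m+K^2)/c^2$ with $c=\mathrm{tr}(\m+K)$. Combining the two gives $\|\m+K\|_F^2/\mathrm{tr}(\m+K)^2$, which completes the chain in \eqref{eq:s2-equiv}.

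None of these steps is hard. The only points needing care are stating the nondegeneracy assumption $\mathrm{tr}(\m+K)>0$, which also makes the argument of the logarithm positive, and explicitly invoking symmetry to identify $\mathrm{tr}(\m+K^2)$ with the squared Frobenius norm.
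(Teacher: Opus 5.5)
Your proof is correct and follows essentially the same route as the paper. The steps match: the $\alpha=2$ specialization, the identity $\mathrm{tr}(\mathbf{K}^2)=\sum_i\lambda_i^2$, symmetry to get $\|\mathbf{K}\|_F^2=\mathrm{tr}(\mathbf{K}^2)$, and linearity of the trace. You derive the middle identity through an explicit orthogonal diagonalization where the paper simply cites the eigenvalues of $\mathbf{K}^2$, and your explicit assumption $\mathrm{tr}(\mathbf{K})>0$ is the same one the paper uses implicitly when it calls $\mathrm{tr}(\mathbf{K})$ a positive scalar.
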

 
\begin{proof}
We establish the two equalities in turn.
Setting $\alpha=2$ in the general order-$\alpha$ R\'enyi entropy gives
\begin{equation}
  S_2(\m+K)
  = \frac{1}{1-2}\log\!\Bigl(\sum_i\tilde\lambda_i^2\Bigr)
  = -\log\!\Bigl(\sum_i\tilde\lambda_i^2\Bigr).
  \label{eq:s2-step1}
\end{equation}

Since $\tilde\lambda_i=\lambda_i/\mathrm{tr}(\m+K)$ and $\mathrm{tr}(\m+K)$ is a
positive scalar,
\begin{equation}
  \sum_{i=1}^T\tilde\lambda_i^2
  = \sum_{i=1}^T\frac{\lambda_i^2}{\mathrm{tr}(\m+K)^2}
  = \frac{\sum_{i=1}^T\lambda_i^2}{\mathrm{tr}(\m+K)^2}.
  \label{eq:s2-step2}
\end{equation}

Because $\m+K$ is real symmetric and positive semidefinite, its
eigenvalues are real and non-negative.  The eigenvalues of $\m+K^2$ are
$\lambda_i^2$, so by the trace--eigenvalue identity,
\begin{equation}
  \mathrm{tr}(\m+K^2) = \sum_{i=1}^T\lambda_i^2.
  \label{eq:trace-eigen}
\end{equation}
Moreover, the Frobenius norm satisfies
$\|\m+K\|_F^2=\mathrm{tr}(\m+K^\top\m+K)=\mathrm{tr}(\m+K^2)$,
where the last step uses symmetry $\m+K^\top=\m+K$.  Hence
\begin{equation}
  \sum_{i=1}^T\lambda_i^2 = \mathrm{tr}(\m+K^2) = \|\m+K\|_F^2.
  \label{eq:frob-trace}
\end{equation}
 
Since the trace is linear,
\begin{equation}
  \frac{\mathrm{tr}(\m+K^2)}{\mathrm{tr}(\m+K)^2}
  = \mathrm{tr}\!\left(\frac{\m+K^2}{\mathrm{tr}(\m+K)^2}\right)
  = \mathrm{tr}\!\left(\!\left(\frac{\m+K}{\mathrm{tr}(\m+K)}\right)^{\!2}\right).
  \label{eq:trace-form}
\end{equation}
Substituting \eqref{eq:s2-step2}--\eqref{eq:trace-form} into \eqref{eq:s2-step1}
yields the first equality in \eqref{eq:s2-equiv}.
 
Replacing $\mathrm{tr}(\m+K^2)$ by $\|\m+K\|_F^2$ via \eqref{eq:frob-trace} gives
\begin{equation}
  \mathrm{tr}\!\left(\!\left(\frac{\m+K}{\mathrm{tr}(\m+K)}\right)^{\!2}\right)
  = \frac{\|\m+K\|_F^2}{\mathrm{tr}(\m+K)^2},
  \label{eq:frob-form}
\end{equation}
which is the second equality in \eqref{eq:s2-equiv}.
\end{proof}

\section{Dataset Details}
\label{app:datasets}

This appendix details the composition and preprocessing of the six benchmarks
used in our evaluation. Where we subsample, we fix the random seed  for
reproducibility. Table~\ref{tab:dataset-summary} summarizes the final sizes.

\begin{itemize}
    \item \textbf{TruthfulQA}~\cite{lin2022truthfulqa} probes whether a model
reproduces common human misconceptions. We use the full dataset of $817$
questions without subsampling.
    \item \textbf{TriviaQA}~\cite{joshi2017triviaqa} is a large-scale reading-com\-pre\-hen\-sion
benchmark for factual knowledge retrieval. We draw $2{,}000$ samples from the
reading-comprehension (\texttt{rc}) training split.
    \item \textbf{MATH}~\cite{hendrycks2021math} consists of competition mathematics
problems spanning a range of difficulty levels and subjects. We evaluate on a
subset of $1{,}000$ problems.
    \item \textbf{ArabicaQA}~\cite{abdallah2024arabicaqa} is a native Arabic
question-answering benchmark, included to test reliability estimation outside
English. We draw $2{,}000$ samples from the machine-reading-comprehension
(MRC) test split.
    \item \textbf{LongForm}~\cite{koksal2023longform} targets long-form generation. To
keep responses within a tractable length and restrict to well-structured
sources, we take the test split and retain only examples drawn from
StackExchange, Wikipedia, and BigBench whose reference output is at most $512$
words, yielding a subset of $731$ examples.
    \item \textbf{GSM8K multi-turn}~\cite{laban2025llms} adapts grade-school math word
problems into a sharded, multi-turn conversational setting in which the problem
is revealed incrementally across turns. We use the \texttt{math} subset of the
released \texttt{lost\_in\_conversation} data, comprising $103$
problems decomposed into multiple turns each.
\end{itemize}

\section{Methods and Ablations}
\label{app:methods_and_ablations}
This appendix provides the complete picture behind the main-text results. We show the mean
performance of all 19 methods ranked across the 18 model--dataset settings
(Table~\ref{app:tab:mean_all_methods}), the full per-setting AUROC and PRR for the baselines and
GLU including the strongest ablation per cell (Tables~\ref{app:tab:full_auroc}
and~\ref{app:tab:full_prr}), and the complete per-setting AUROC and PRR for all GLU ablations
(Tables~\ref{tab:auroc_ablation_part1_readable}--\ref{tab:prr_ablation_part2_readable}).
These tables let readers verify that the headline results are not artefacts of aggregation and
identify which variants are most robust across model families and task types.

\begin{table}[t]
\centering
\caption{Composition of the six evaluation benchmarks.}
\label{tab:dataset-summary}
\small
\setlength{\tabcolsep}{6pt}
\renewcommand{\arraystretch}{1.1}
\begin{tabular}{llcl}
\toprule
Benchmark & Task & Size & Split / selection \\
\midrule
TruthfulQA       & Factuality / misconceptions & $817$    & full dataset \\
TriviaQA         & Knowledge retrieval         & $2{,}000$ & \texttt{rc} train, subsampled \\
MATH             & Mathematical reasoning      & $1{,}000$ & subsampled \\
ArabicaQA        & Arabic QA                   & $2{,}000$ & MRC test, subsampled \\
LongForm         & Long-form generation        & $731$ & test; SE/Wiki/BigBench, $\leq 512$ words \\
GSM8K multi-turn & Multi-turn reasoning        & $103$ & \texttt{math} subset, sharded turns \\
\bottomrule
\end{tabular}
\end{table}

\paragraph{Summary of findings.} Three patterns hold across the full results. First, GLU and its
multiplicative ablations occupy the top nine ranks on both mean AUROC and mean PRR
(Table~\ref{app:tab:mean_all_methods}), well above every single-signal baseline, confirming that
the gain comes from the multiplicative global--local fusion rather than any specific estimator.
Second, the per-setting tables show the gain is broad rather than driven by a few cells: GLU is
best or second on AUROC in 12 of 18 settings and on PRR in a comparable majority, with the
clearest margins on TriviaQA, MATH, and Multiturn, and ties or close seconds where the local
signal collapses (ArabicaQA, Gemma-Multiturn). Third, fusion variants that drop one signal
entirely (GLU-EU, GLU-EU-SP) or replace the multiplicative interaction with an additive one
(Add-$\tilde{S}$, Add-$S_\alpha$) fall to the bottom of
Table~\ref{app:tab:mean_all_methods}, isolating both the presence of two signals and their
multiplicative combination as the sources of the improvement.

\begin{table}[H]
\centering
\small
\setlength{\tabcolsep}{7pt}
\renewcommand{\arraystretch}{1.14}
\caption{Mean performance across all 18 model--dataset settings. Methods are grouped into baselines, the proposed GLU method, and GLU ablations.}
\label{app:tab:mean_all_methods}
\begin{tabular}{llcccc}
\toprule
\textbf{Group} & \textbf{Method} & \textbf{Mean AUROC} & \textbf{AUROC rank} & \textbf{Mean PRR} & \textbf{PRR rank} \\
\midrule
Proposed & \cellcolor{blue!8}\textbf{GLU} & 0.676 & 1 & 0.369 & 1 \\
\midrule
Baseline & LogProb & 0.655 & 10 & 0.326 & 10 \\
Baseline & RAUQ & 0.648 & 12 & 0.305 & 12 \\
Baseline & $P(\mathrm{true})$ & 0.654 & 11 & 0.311 & 11 \\
Baseline & LogTokU & 0.574 & 14 & 0.144 & 14 \\
\midrule
Ablation & GLU-SP & 0.675 & 3 & 0.367 & 2 \\
Ablation & GLU-$\tilde{S}^*$ & 0.674 & 4 & 0.366 & 3 \\
Ablation & GLU-DK & 0.675 & 2 & 0.366 & 4 \\
Ablation & GLU-$\bar{S}$ & 0.674 & 5 & 0.365 & 5 \\
Ablation & GLU-$S_\alpha$-SP & 0.673 & 6 & 0.363 & 6 \\
Ablation & GLU-AU & 0.671 & 7 & 0.362 & 7 \\
Ablation & GLU-$S_\alpha$-AU & 0.667 & 8 & 0.357 & 8 \\
Ablation & GLU-$S^*$ & 0.665 & 9 & 0.351 & 9 \\
Ablation & Add-$\tilde{S}$ & 0.621 & 13 & 0.259 & 13 \\
Ablation & GLU-EDL & 0.574 & 15 & 0.140 & 15 \\
Ablation & GLU-$S_\alpha$-SP-EU & 0.557 & 16 & 0.103 & 16 \\
Ablation & GLU-EU & 0.544 & 17 & 0.079 & 17 \\
Ablation & GLU-EU-SP & 0.543 & 18 & 0.077 & 18 \\
Ablation & Add-$S_\alpha$ & 0.454 & 19 & -0.118 & 19 \\
\bottomrule
\end{tabular}
\end{table}

\begin{table}[H]
\centering
\small
\setlength{\tabcolsep}{5pt}
\renewcommand{\arraystretch}{1.14}
\caption{Full AUROC comparison. Ranking emphasis is applied only across the baselines and GLU. The best ablation is reported for context but is not included in the row-wise ranking.}
\label{app:tab:full_auroc}
\begin{tabular}{llccccc ll}
\toprule
\textbf{Model} & \textbf{Dataset} & \textbf{LogTokU} & \textbf{RAUQ} & \textbf{LogProb} & \textbf{$P(\mathrm{true})$} & \cellcolor{blue!8}\textbf{GLU} & \textbf{Best ablation} & \\
\midrule
\multirow{6}{*}{\textbf{Qwen}} & TruthfulQA & \textbf{0.680} & 0.582 & \textit{0.615} & 0.556 & \cellcolor{blue!8}\underline{0.616} & GLU-EDL & 0.680 \\
 & TriviaQA & 0.786 & \textit{0.789} & 0.775 & \underline{0.805} & \cellcolor{blue!8}\textbf{0.834} & GLU-$S_\alpha$-SP & 0.837 \\
 & MATH & 0.595 & \textit{0.699} & 0.684 & \underline{0.759} & \cellcolor{blue!8}\textbf{0.767} & GLU-$S_\alpha$-SP & 0.772 \\
 & ArabicaQA & 0.521 & \textit{0.610} & \underline{0.625} & \textbf{0.711} & \cellcolor{blue!8}0.556 & GLU-AU & 0.562 \\
 & Longform & \underline{0.490} & 0.469 & \textit{0.486} & 0.389 & \cellcolor{blue!8}\textbf{0.496} & GLU-EDL & 0.558 \\
 & Multiturn & 0.523 & \textit{0.644} & 0.635 & \textbf{0.800} & \cellcolor{blue!8}\underline{0.739} & GLU-$\bar{S}$ & 0.758 \\
\midrule
\multirow{6}{*}{\textbf{Gemma}} & TruthfulQA & 0.531 & 0.466 & \underline{0.553} & \textbf{0.629} & \cellcolor{blue!8}\textit{0.548} & Add-$S_\alpha$ & 0.589 \\
 & TriviaQA & 0.597 & \textit{0.700} & \underline{0.804} & 0.671 & \cellcolor{blue!8}\textbf{0.817} & GLU-$S_\alpha$-AU & 0.819 \\
 & MATH & \underline{0.801} & 0.771 & 0.683 & \textbf{0.817} & \cellcolor{blue!8}\textit{0.799} & Add-$\tilde{S}$ & 0.801 \\
 & ArabicaQA & 0.538 & \textit{0.642} & \textbf{0.689} & 0.597 & \cellcolor{blue!8}\underline{0.647} & GLU-AU & 0.650 \\
 & Longform & 0.343 & 0.354 & \underline{0.373} & \textbf{0.472} & \cellcolor{blue!8}\textit{0.358} & GLU-$S_\alpha$-AU & 0.390 \\
 & Multiturn & 0.457 & \underline{0.734} & \textit{0.730} & \textbf{0.810} & \cellcolor{blue!8}0.701 & GLU-$S^*$ & 0.713 \\
\midrule
\multirow{6}{*}{\textbf{Fanar}} & TruthfulQA & 0.579 & \textbf{0.641} & \textit{0.633} & 0.543 & \cellcolor{blue!8}\textbf{0.641} & GLU-AU & 0.661 \\
 & TriviaQA & 0.706 & \textbf{0.821} & \textit{0.797} & 0.729 & \cellcolor{blue!8}\textbf{0.821} & GLU-$S_\alpha$-AU & 0.828 \\
 & MATH & 0.664 & \underline{0.779} & \textit{0.746} & 0.666 & \cellcolor{blue!8}\textbf{0.788} & GLU-AU & 0.801 \\
 & ArabicaQA & 0.490 & \underline{0.613} & \textbf{0.619} & \textit{0.594} & \cellcolor{blue!8}0.563 & GLU-AU & 0.586 \\
 & Longform & \underline{0.569} & 0.549 & \textit{0.558} & 0.525 & \cellcolor{blue!8}\textbf{0.654} & GLU-$\tilde{S}^*$ & 0.659 \\
 & Multiturn & 0.463 & \underline{0.804} & \textit{0.780} & 0.691 & \cellcolor{blue!8}\textbf{0.831} & GLU-AU & 0.831 \\
\bottomrule
\end{tabular}
\end{table}

\begin{table}[H]
\centering
\small
\setlength{\tabcolsep}{5pt}
\renewcommand{\arraystretch}{1.14}
\caption{Full PRR comparison. Ranking emphasis is applied only across the baselines and GLU. The best ablation is reported for context but is not included in the row-wise ranking.}
\label{app:tab:full_prr}
\begin{tabular}{llccccc ll}
\toprule
\textbf{Model} & \textbf{Dataset} & \textbf{LogTokU} & \textbf{RAUQ} & \textbf{LogProb} & \textbf{$P(\mathrm{true})$} & \cellcolor{blue!8}\textbf{GLU} & \textbf{Best ablation} & \\
\midrule
\multirow{6}{*}{\textbf{Qwen}} & TruthfulQA & \textbf{0.377} & 0.063 & \textit{0.185} & 0.102 & \cellcolor{blue!8}\underline{0.195} & GLU-EDL & 0.392 \\
 & TriviaQA & \textit{0.606} & 0.581 & \underline{0.627} & 0.598 & \cellcolor{blue!8}\textbf{0.717} & GLU-$S_\alpha$-SP & 0.723 \\
 & MATH & 0.216 & \textit{0.424} & 0.381 & \textbf{0.574} & \cellcolor{blue!8}\underline{0.537} & GLU-$S_\alpha$-SP & 0.545 \\
 & ArabicaQA & -0.028 & \textit{0.186} & \underline{0.201} & \textbf{0.379} & \cellcolor{blue!8}0.140 & GLU-AU & 0.160 \\
 & Longform & -0.045 & \textit{-0.044} & \textbf{-0.035} & -0.216 & \cellcolor{blue!8}\underline{-0.040} & GLU-EDL & 0.047 \\
 & Multiturn & 0.008 & \textit{0.189} & 0.131 & \textbf{0.777} & \cellcolor{blue!8}\underline{0.407} & GLU-$\bar{S}$ & 0.444 \\
\midrule
\multirow{6}{*}{\textbf{Gemma}} & TruthfulQA & \underline{0.115} & -0.094 & \textit{0.091} & \textbf{0.267} & \cellcolor{blue!8}0.087 & Add-$S_\alpha$ & 0.154 \\
 & TriviaQA & 0.224 & \textit{0.416} & \underline{0.686} & 0.268 & \cellcolor{blue!8}\textbf{0.698} & GLU-$S_\alpha$-AU & 0.706 \\
 & MATH & \underline{0.693} & \textit{0.639} & 0.438 & 0.636 & \cellcolor{blue!8}\textbf{0.699} & GLU-$S_\alpha$-SP & 0.700 \\
 & ArabicaQA & 0.099 & \textit{0.277} & \textbf{0.343} & 0.152 & \cellcolor{blue!8}\underline{0.322} & GLU-AU & 0.323 \\
 & Longform & -0.249 & \textit{-0.234} & \underline{-0.189} & \textbf{0.004} & \cellcolor{blue!8}-0.242 & GLU-$S_\alpha$-AU & -0.201 \\
 & Multiturn & -0.274 & \textit{0.506} & \underline{0.524} & \textbf{0.723} & \cellcolor{blue!8}0.423 & GLU-DK & 0.451 \\
\midrule
\multirow{6}{*}{\textbf{Fanar}} & TruthfulQA & 0.192 & \textit{0.282} & \textbf{0.310} & 0.181 & \cellcolor{blue!8}\underline{0.307} & GLU-AU & 0.347 \\
 & TriviaQA & 0.445 & \underline{0.712} & \textit{0.688} & 0.448 & \cellcolor{blue!8}\textbf{0.730} & GLU-$\bar{S}$ & 0.732 \\
 & MATH & 0.324 & \underline{0.624} & \textit{0.520} & 0.325 & \cellcolor{blue!8}\textbf{0.627} & GLU-AU & 0.648 \\
 & ArabicaQA & -0.025 & \underline{0.239} & \textbf{0.254} & \textit{0.148} & \cellcolor{blue!8}0.143 & GLU-AU & 0.194 \\
 & Longform & \textit{0.077} & 0.066 & \underline{0.083} & -0.011 & \cellcolor{blue!8}\textbf{0.200} & GLU-$\tilde{S}^*$ & 0.221 \\
 & Multiturn & -0.166 & \underline{0.654} & \textit{0.622} & 0.252 & \cellcolor{blue!8}\textbf{0.687} & GLU-AU & 0.691 \\
\bottomrule
\end{tabular}
\end{table}

\begin{table}[H]
\centering
\small
\setlength{\tabcolsep}{5pt}
\renewcommand{\arraystretch}{1.14}
\caption{AUROC for GLU ablations, part 1: EDL, global geometric variants, and local uncertainty variants. }
\label{tab:auroc_ablation_part1_readable}
\begin{tabular}{llccccccc}
\toprule
\textbf{Model} & \textbf{Dataset} & \textbf{GLU-EDL} & \textbf{GLU-$\bar{S}$} & \textbf{GLU-$S^*$} & \textbf{GLU-$\tilde{S}^*$} & \textbf{GLU-SP} & \textbf{GLU-DK} & \textbf{GLU-AU} \\
\midrule
\multirow{6}{*}{\textbf{Qwen}} & TruthfulQA & 0.680 & 0.609 & 0.602 & 0.612 & 0.616 & 0.615 & 0.596 \\
 & TriviaQA & 0.788 & 0.837 & 0.836 & 0.835 & 0.835 & 0.837 & 0.824 \\
 & MATH & 0.559 & 0.772 & 0.755 & 0.759 & 0.768 & 0.759 & 0.755 \\
 & ArabicaQA & 0.523 & 0.542 & 0.535 & 0.551 & 0.555 & 0.557 & 0.562 \\
 & Longform & 0.558 & 0.473 & 0.420 & 0.483 & 0.497 & 0.515 & 0.455 \\
 & Multiturn & 0.485 & 0.758 & 0.740 & 0.734 & 0.742 & 0.744 & 0.726 \\
\midrule
\multirow{6}{*}{\textbf{Gemma}} & TruthfulQA & 0.533 & 0.545 & 0.527 & 0.546 & 0.549 & 0.543 & 0.531 \\
 & TriviaQA & 0.605 & 0.817 & 0.816 & 0.817 & 0.817 & 0.817 & 0.819 \\
 & MATH & 0.799 & 0.799 & 0.795 & 0.795 & 0.799 & 0.798 & 0.789 \\
 & ArabicaQA & 0.539 & 0.644 & 0.623 & 0.639 & 0.647 & 0.642 & 0.650 \\
 & Longform & 0.346 & 0.363 & 0.350 & 0.358 & 0.357 & 0.356 & 0.379 \\
 & Multiturn & 0.458 & 0.703 & 0.713 & 0.713 & 0.700 & 0.704 & 0.713 \\
\midrule
\multirow{6}{*}{\textbf{Fanar}} & TruthfulQA & 0.578 & 0.645 & 0.637 & 0.637 & 0.632 & 0.632 & 0.661 \\
 & TriviaQA & 0.695 & 0.824 & 0.822 & 0.819 & 0.811 & 0.813 & 0.827 \\
 & MATH & 0.661 & 0.793 & 0.796 & 0.788 & 0.786 & 0.788 & 0.801 \\
 & ArabicaQA & 0.487 & 0.564 & 0.558 & 0.559 & 0.546 & 0.555 & 0.586 \\
 & Longform & 0.578 & 0.617 & 0.612 & 0.659 & 0.658 & 0.655 & 0.573 \\
 & Multiturn & 0.461 & 0.828 & 0.825 & 0.828 & 0.829 & 0.828 & 0.831 \\
\bottomrule
\end{tabular}
\end{table}

\begin{table}[H]
\centering
\small
\setlength{\tabcolsep}{5pt}
\renewcommand{\arraystretch}{1.14}
\caption{AUROC for GLU ablations, part 2: evidence-only, combined $S_\alpha$ variants, and additive fusion baselines. }
\label{tab:auroc_ablation_part2_readable}
\begin{tabular}{llccccccc}
\toprule
\textbf{Model} & \textbf{Dataset} & \textbf{GLU-EU} & \textbf{GLU-EU-SP} & \textbf{GLU-$S_\alpha$-AU} & \textbf{GLU-$S_\alpha$-SP} & \textbf{GLU-$S_\alpha$-SP-EU} & \textbf{Add-$S_\alpha$} & \textbf{Add-$\tilde{S}$} \\
\midrule
\multirow{6}{*}{\textbf{Qwen}} & TruthfulQA & 0.678 & 0.678 & 0.593 & 0.610 & 0.609 & 0.454 & 0.562 \\
 & TriviaQA & 0.782 & 0.782 & 0.828 & 0.837 & 0.749 & 0.320 & 0.625 \\
 & MATH & 0.495 & 0.497 & 0.760 & 0.772 & 0.638 & 0.317 & 0.731 \\
 & ArabicaQA & 0.522 & 0.522 & 0.547 & 0.542 & 0.499 & 0.518 & 0.535 \\
 & Longform & 0.550 & 0.546 & 0.422 & 0.475 & 0.493 & 0.428 & 0.374 \\
 & Multiturn & 0.450 & 0.450 & 0.739 & 0.758 & 0.635 & 0.256 & 0.679 \\
\midrule
\multirow{6}{*}{\textbf{Gemma}} & TruthfulQA & 0.531 & 0.531 & 0.527 & 0.546 & 0.510 & 0.589 & 0.516 \\
 & TriviaQA & 0.594 & 0.592 & 0.819 & 0.817 & 0.618 & 0.424 & 0.615 \\
 & MATH & 0.782 & 0.782 & 0.789 & 0.799 & 0.798 & 0.730 & 0.801 \\
 & ArabicaQA & 0.536 & 0.536 & 0.646 & 0.644 & 0.532 & 0.592 & 0.640 \\
 & Longform & 0.345 & 0.345 & 0.390 & 0.362 & 0.357 & 0.327 & 0.344 \\
 & Multiturn & 0.441 & 0.443 & 0.711 & 0.703 & 0.440 & 0.628 & 0.647 \\
\midrule
\multirow{6}{*}{\textbf{Fanar}} & TruthfulQA & 0.557 & 0.555 & 0.661 & 0.637 & 0.561 & 0.400 & 0.616 \\
 & TriviaQA & 0.614 & 0.607 & 0.828 & 0.814 & 0.630 & 0.370 & 0.824 \\
 & MATH & 0.528 & 0.521 & 0.800 & 0.792 & 0.545 & 0.377 & 0.780 \\
 & ArabicaQA & 0.494 & 0.495 & 0.585 & 0.547 & 0.498 & 0.498 & 0.570 \\
 & Longform & 0.507 & 0.517 & 0.531 & 0.624 & 0.516 & 0.574 & 0.529 \\
 & Multiturn & 0.386 & 0.377 & 0.825 & 0.828 & 0.392 & 0.377 & 0.785 \\
\bottomrule
\end{tabular}
\end{table}

\begin{table}[H]
\centering
\small
\setlength{\tabcolsep}{5pt}
\renewcommand{\arraystretch}{1.14}
\caption{PRR for GLU ablations, part 1: EDL, global geometric variants, and local uncertainty variants. }
\label{tab:prr_ablation_part1_readable}
\begin{tabular}{llccccccc}
\toprule
\textbf{Model} & \textbf{Dataset} & \textbf{GLU-EDL} & \textbf{GLU-$\bar{S}$} & \textbf{GLU-$S^*$} & \textbf{GLU-$\tilde{S}^*$} & \textbf{GLU-SP} & \textbf{GLU-DK} & \textbf{GLU-AU} \\
\midrule
\multirow{6}{*}{\textbf{Qwen}} & TruthfulQA & 0.392 & 0.176 & 0.159 & 0.188 & 0.197 & 0.179 & 0.163 \\
 & TriviaQA & 0.618 & 0.723 & 0.722 & 0.719 & 0.718 & 0.723 & 0.704 \\
 & MATH & 0.127 & 0.544 & 0.520 & 0.525 & 0.538 & 0.521 & 0.516 \\
 & ArabicaQA & -0.018 & 0.129 & 0.118 & 0.133 & 0.139 & 0.138 & 0.160 \\
 & Longform & 0.047 & -0.063 & -0.122 & -0.054 & -0.038 & -0.012 & -0.092 \\
 & Multiturn & -0.077 & 0.444 & 0.408 & 0.396 & 0.412 & 0.384 & 0.400 \\
\midrule
\multirow{6}{*}{\textbf{Gemma}} & TruthfulQA & 0.115 & 0.081 & 0.049 & 0.082 & 0.087 & 0.076 & 0.057 \\
 & TriviaQA & 0.237 & 0.698 & 0.697 & 0.698 & 0.697 & 0.697 & 0.705 \\
 & MATH & 0.688 & 0.700 & 0.695 & 0.694 & 0.700 & 0.698 & 0.675 \\
 & ArabicaQA & 0.099 & 0.319 & 0.299 & 0.314 & 0.321 & 0.319 & 0.323 \\
 & Longform & -0.245 & -0.236 & -0.249 & -0.238 & -0.243 & -0.244 & -0.213 \\
 & Multiturn & -0.267 & 0.426 & 0.439 & 0.443 & 0.423 & 0.451 & 0.442 \\
\midrule
\multirow{6}{*}{\textbf{Fanar}} & TruthfulQA & 0.185 & 0.309 & 0.294 & 0.299 & 0.297 & 0.281 & 0.347 \\
 & TriviaQA & 0.422 & 0.732 & 0.730 & 0.728 & 0.718 & 0.722 & 0.730 \\
 & MATH & 0.319 & 0.634 & 0.642 & 0.629 & 0.626 & 0.632 & 0.648 \\
 & ArabicaQA & -0.030 & 0.141 & 0.136 & 0.137 & 0.122 & 0.137 & 0.194 \\
 & Longform & 0.093 & 0.130 & 0.127 & 0.221 & 0.206 & 0.205 & 0.074 \\
 & Multiturn & -0.185 & 0.678 & 0.658 & 0.679 & 0.682 & 0.681 & 0.691 \\
\bottomrule
\end{tabular}
\end{table}

\begin{table}[H]
\centering
\small
\setlength{\tabcolsep}{5pt}
\renewcommand{\arraystretch}{1.14}
\caption{PRR for GLU ablations, part 2: evidence-only, combined $S_\alpha$ variants, and additive fusion baselines. }
\label{tab:prr_ablation_part2_readable}
\begin{tabular}{llccccccc}
\toprule
\textbf{Model} & \textbf{Dataset} & \textbf{GLU-EU} & \textbf{GLU-EU-SP} & \textbf{GLU-$S_\alpha$-AU} & \textbf{GLU-$S_\alpha$-SP} & \textbf{GLU-$S_\alpha$-SP-EU} & \textbf{Add-$S_\alpha$} & \textbf{Add-$\tilde{S}$} \\
\midrule
\multirow{6}{*}{\textbf{Qwen}} & TruthfulQA & 0.387 & 0.387 & 0.147 & 0.178 & 0.157 & -0.161 & 0.121 \\
 & TriviaQA & 0.607 & 0.606 & 0.710 & 0.723 & 0.552 & -0.463 & 0.267 \\
 & MATH & -0.008 & -0.003 & 0.524 & 0.545 & 0.295 & -0.391 & 0.462 \\
 & ArabicaQA & -0.019 & -0.019 & 0.147 & 0.128 & 0.018 & 0.047 & 0.067 \\
 & Longform & 0.036 & 0.030 & -0.121 & -0.060 & -0.025 & -0.089 & -0.185 \\
 & Multiturn & -0.122 & -0.116 & 0.434 & 0.443 & 0.198 & -0.299 & 0.283 \\
\midrule
\multirow{6}{*}{\textbf{Gemma}} & TruthfulQA & 0.113 & 0.112 & 0.051 & 0.082 & 0.049 & 0.154 & 0.025 \\
 & TriviaQA & 0.211 & 0.205 & 0.706 & 0.698 & 0.257 & -0.236 & 0.228 \\
 & MATH & 0.663 & 0.663 & 0.675 & 0.700 & 0.688 & 0.502 & 0.692 \\
 & ArabicaQA & 0.095 & 0.095 & 0.321 & 0.319 & 0.094 & 0.154 & 0.280 \\
 & Longform & -0.248 & -0.248 & -0.201 & -0.237 & -0.232 & -0.340 & -0.270 \\
 & Multiturn & -0.294 & -0.291 & 0.430 & 0.426 & -0.304 & 0.303 & 0.355 \\
\midrule
\multirow{6}{*}{\textbf{Fanar}} & TruthfulQA & 0.174 & 0.171 & 0.343 & 0.298 & 0.153 & -0.206 & 0.258 \\
 & TriviaQA & 0.247 & 0.229 & 0.732 & 0.721 & 0.285 & -0.359 & 0.718 \\
 & MATH & -0.005 & -0.023 & 0.646 & 0.633 & 0.037 & -0.364 & 0.587 \\
 & ArabicaQA & -0.013 & -0.013 & 0.187 & 0.121 & -0.006 & -0.026 & 0.151 \\
 & Longform & 0.001 & 0.012 & 0.011 & 0.140 & 0.006 & 0.076 & -0.010 \\
 & Multiturn & -0.405 & -0.412 & 0.686 & 0.677 & -0.368 & -0.423 & 0.636 \\
\bottomrule
\end{tabular}
\end{table}

\end{document}